\theoremstyle{definition}
\newtheorem{definition}{Definition}[section]
\newtheorem{proposition}{Proposition}[section]
\title{\textbf{GraphNet: A Large-Scale Computational Graph Dataset for Tensor Compiler Research}} 
\author{\normalsize
\textbf{Xinqi Li\thanks{Corresponding author: lixinqi04@baidu.com}\quad  Yiqun Liu\quad Shan Jiang\quad Enrong Zheng\quad Huaijin Zheng} \\ \textbf{Wenhao Dai\quad Haodong Deng\quad Dianhai Yu\quad Yanjun Ma}}
\date{\normalsize\vspace{-1em}PaddlePaddle Team, Baidu Inc.}
\begin{document}

\maketitle 
\vspace{-1cm}
\section*{\centering Abstract}
\noindent We introduce GraphNet, a dataset of 2.7K real-world deep learning computational graphs with rich metadata, spanning six major task categories across multiple deep learning frameworks. To evaluate tensor compiler performance on these samples, we propose the benchmark metric Speedup Score $S_t$, which jointly considers runtime speedup and execution correctness under tunable tolerance levels, offering a reliable measure of general optimization capability. Furthermore, we extend $S_t$  to the Error-aware Speedup Score $ES_t$, which incorporates error information and helps compiler developers identify key performance bottlenecks. In this report, we benchmark the default tensor compilers, CINN for PaddlePaddle and TorchInductor for PyTorch, on computer vision (CV) and natural language processing (NLP) samples to demonstrate the practicality of GraphNet. The full construction pipeline with graph extraction and compiler evaluation tools is available at \href{https://github.com/PaddlePaddle/GraphNet}{https://github.com/PaddlePaddle/GraphNet}.

\section{Introduction}
\label{sec:intro}

The development of high-performance GPU kernels has become critical for computational efficiency in modern deep learning workloads. A widely adopted approach integrates deep learning frameworks, such as PaddlePaddle\cite{ma2019paddlepaddle} and PyTorch\cite{Pytorch}, with vendor-specific operator libraries (e.g., cuDNN~\cite{chetlur2014cudnn}, oneDNN~\cite{oneDNN_Contributors_oneAPI_Deep_Neural}). However, emerging hardware featuring low-precision formats (e.g., BF16\cite{wang2019bfloat16}, FP8\cite{micikevicius2022fp8formatsdeeplearning}) and advanced memory access patterns is creating a growing demand for custom operators-a demand that vendor-specific libraries often struggle to accommodate. To meet these demands, modern deep learning systems increasingly rely on tensor compilers (e.g., TVM~\cite{TVM}, XLA~\cite{XLA}, BladeDISC~\cite{BladeDISC}) to lower high-level computational graphs into efficient backend implementations for heterogeneous hardware platforms.

Despite these advances, deep learning engineers still rely heavily on manual tuning, as existing tensor compilation frameworks often lack fine-grained hardware control and offer limited support for cross-platform complex algorithms. Meanwhile, researchers are increasingly exploring the use of large language models (LLMs)~\cite{ouyang2025kernelbench} and AI coding agents~\cite{wang2025geakintroducingtritonkernel} to automatically generate efficient operators and kernels. In this challenging context, systematic evaluation of existing tensor compilers is essential to identify performance bottlenecks and guide the evolution of next-generation compilers.

Existing benchmarks are often ad hoc, relying on a small set of hand-picked samples, with limited coverage of up-to-date, real-world models from the community and lacking sufficient support for cross-framework evaluation. To address these limitations, we introduce \textbf{GraphNet}, a large-scale dataset of deep learning computational graphs designed to enable systematic evaluation of tensor compilers across tasks and frameworks. We further propose a unified metric to benchmark compiler performance by jointly accounting for runtime speedup, numerical correctness, and compilation failures. Our main contributions are as follows:

\begin{itemize}
    \item We collected over 2.7k computational graphs from real-world models, covering diverse task categories and mainstream frameworks (e.g., PaddlePaddle, PyTorch). All samples are stored in a unified format and are compatible with multiple tensor compiler backends.
    
    \item We evaluate the performance of CINN\cite{cinn2021} and TorchInductor on CV and NLP samples from GraphNet, and propose the \textbf{Speedup Score} ($S_t$) to measure the general optimization capability of tensor compilers, as well as the \textbf{Error-aware Speedup Score} ($ES_t$), which accounts for all types of execution failures.
    
    \item We present a detailed study of the dataset construction pipeline and sample constraints, and release the GraphNet dataset along with open-source extraction and evaluation tools.
\end{itemize}

The rest of this paper is organized as follows. Section~\ref{sec:properties} introduces dataset properties and its distribution. Section~\ref{sec:application} presents the design of our evaluation metric and discusses experimental results. Section~\ref{sec:cons-graphnet} describes the dataset construction methodology. Sections~\ref{sec:background} and~\ref{sec:conclusion}  review related works and outline future directions.
\section{Dataset Properties}
\label{sec:properties}
    \textbf{Authenticity:}  
    All samples are extracted from models in mainstream deep learning frameworks, focusing on PaddlePaddle libraries (e.g., PaddleNLP, PaddleX, PaddleScience) and PyTorch libraries (e.g., TorchVision, timm, mmseg). These libraries are actively maintained and widely adopted by the community, ensuring that GraphNet reflects real-world workloads rather than synthetic benchmarks.

     \textbf{Compatibility:}  
    GraphNet samples adopt a standardized format that integrates computational graphs with inputs, weights, and custom operators. This format preserves complete computation semantics without information loss, ensuring compatibility with diverse compiler backends, including CINN (PaddlePaddle), TorchInductor (PyTorch), XLA (TensorFlow), TVM, BladeDISC, and TensorRT. This enables fair and reproducible evaluation, as well as seamless extensions to new compilers.

    \textbf{Diversity:}  
The dataset spans six major task categories: computer vision, natural language processing, audio, multimodal learning, scientific computing, and others. Models vary in scale from a few thousand to 10B parameters, with the most complex sample containing up to 3,686 operators. GraphNet also supports multiple data types, such as BF16, FP16, and FP32, which are essential for evaluating compiler optimizations across different numerical precisions and hardware backends.

\begin{figure*}[htpb]
    \centering
    \captionsetup{font=small}
    \begin{subfigure}[c]{0.45\linewidth}  
        \centering
        \includegraphics[width=0.65\linewidth]{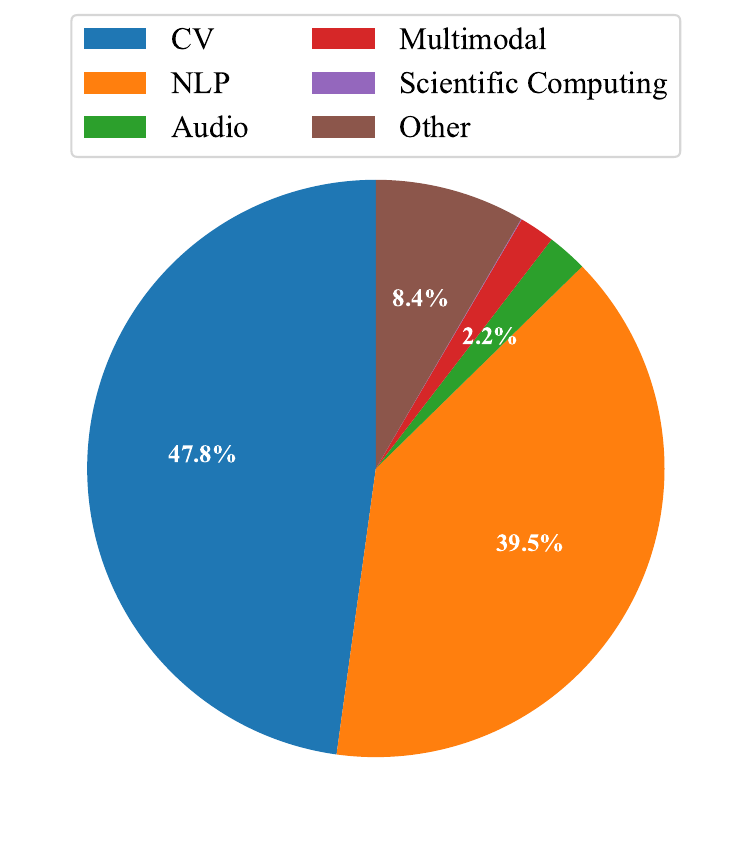}
        \caption{Categories of Computational Graph}
        \label{subfig:dataset_graph_type}
    \end{subfigure}
    \begin{subfigure}[c]{0.4\linewidth}  
        \centering
        \begin{subfigure}{\linewidth}
            \centering
            \includegraphics[width=\linewidth]{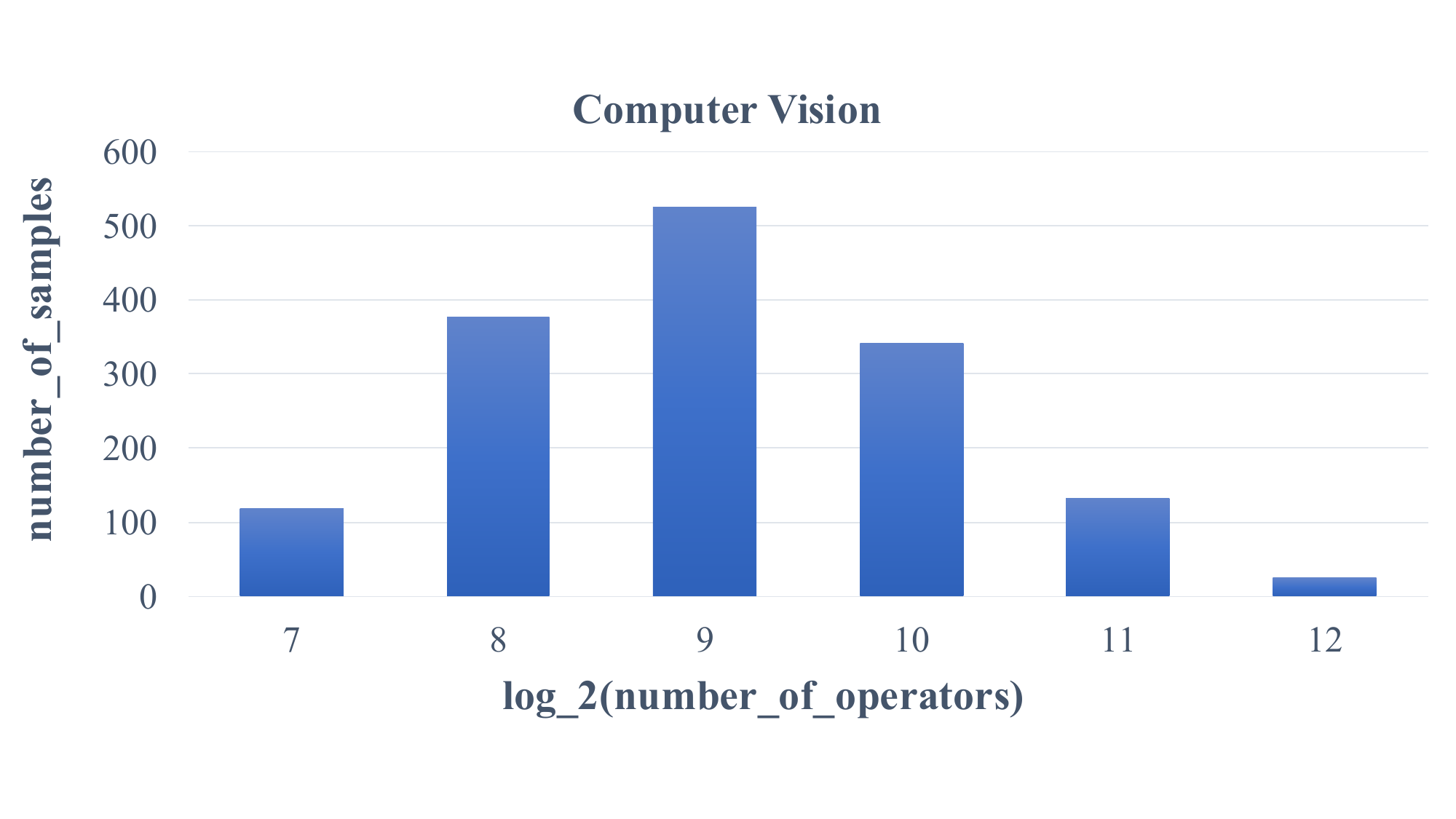}
            \caption{Operator Numbers of CV Models}
            \label{subfig:dataset_cv_op_nums}
        \end{subfigure}
        \begin{subfigure}{\linewidth}
            \centering
            \includegraphics[width=\linewidth]{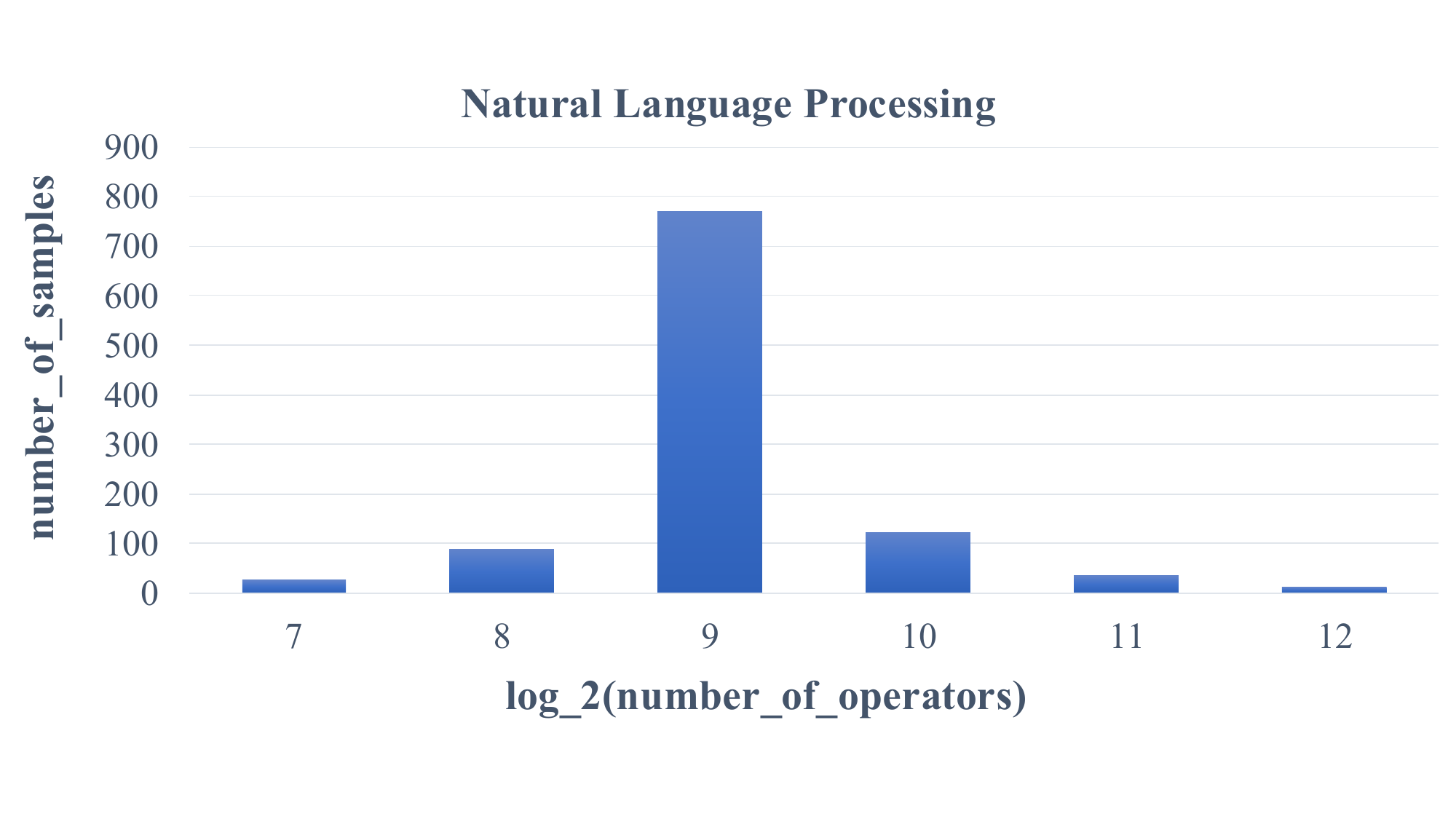}
            \caption{Operator Numbers of NLP Models}
            \label{subfig:dataset_nlp_op_nums}
        \end{subfigure}
    \end{subfigure}
    \hfill
    \caption{\textbf{Statistical properties of the GraphNet dataset. }
(a) The distribution of computational graphs across six major task categories, showing that Computer Vision (47.8\%) and Natural Language Processing (39.5\%) are the dominant domains. 
(b) and (c) Histograms showing the distribution of operator counts (on a log$_2$ scale) for CV and NLP models, respectively. Both categories show a high concentration of graphs with operator counts around $2^9$ (512).}
    \label{fig:properties_computational_graph}
\end{figure*}

\section{Benchmark of Tensor Compilers}
\label{sec:application}
\subsection{Compiler Evaluation}
\label{subsec:compiler-evaluation}

We introduce an automated compiler evaluation workflow that enables unified and repeatable testing of multiple tensor compilers. The evaluation takes GraphNet samples as input and consists of the following key steps:

\begin{enumerate}
    \item \textbf{Baseline Execution:} Execute the original model in Eager mode on a given framework and record its output and execution time $T_{eager}$ as the baseline for subsequent comparisons.
    
    \item \textbf{Compiler Configuration:} Compile the original model into an optimized executable by specifying the target compiler. Currently supported compilers include CINN, TorchInductor, XLA, TVM, BladeDISC, and TensorRT.
    
    \item \textbf{Compiled Execution:} Execute the compiled model after a warmup phase to eliminate cold-start overhead and obtain the pure execution time $T_{compiled}$. Record its output to verify accuracy.

    \item \textbf{Performance Analysis:} Compare the outputs collected from baseline and compiled execution to validate correctness. Quantify compiler performance by considering speedup ratio $ T_{eager}/T_{compiled}$, execution correctness, and performance degradation. The details of the evaluation metrics are discussed in Section~\ref{subsec:eva-metrics}.
\end{enumerate}

\subsection{Evaluation Metrics}
\label{subsec:eva-metrics}

We design two distinct metrics: one for benchmarking tensor compilers and one for compiler development. The benchmarking metric, detailed in Section~\ref{subsubsec:metrics-benchmark}, focuses primarily on acceleration and treats the error penalty as a fixed constant. The development metric (Section~\ref{subsubsec:metrics-development}), exposes detailed error penalty  information, allowing developers to configure it on demand.

\subsubsection{Metrics for Compiler Benchmark}
\label{subsubsec:metrics-benchmark}

The performance evaluation metric is parameterized by a tunable \textbf{tolerance} $t$,  which enables the metric to capture compiler performance under varying correctness criteria, from strict numerical precision to more relaxed settings. Tolerance $t$ determines whether a sample is considered correctly executed, based on the check  
\texttt{assert\_close(x, y, rtol(t), atol(t))}.  
The values of relative tolerance (rtol) and absolute tolerance (atol) corresponding to different $t$ settings and data types are summarized in Appendix~\ref{sec:appendix_atol2rtol}.

Based on tolerance $t$ and its associated correctness criteria, we define the \textbf{Speedup Score} $S_t$ as a unified metric that combines speedup, accuracy, and penalty factors, providing a comprehensive measure of compiler performance:
\begin{equation}
S_t = \alpha^{\lambda} \cdot \beta^{\lambda \eta p} \cdot b^{1-\lambda}
\label{eq:St-macro-form}
\end{equation}

\noindent For readability, we omit the explicit $(t)$ notation in this expression, while noting that $\alpha, \beta, \lambda$, and $\eta$ are all dependent on the tolerance $t$. Eq.~\eqref{eq:St-macro-form} can be viewed as the product of three components:

\begin{itemize}
  \item \textbf{Correct executions:} 
  $\alpha^\lambda$ where $\alpha$ is the geometric mean speedup of all correctly executed samples, 
  and $\lambda$ is the fraction of correctly executed samples, weighting the contribution of this component.

  \item \textbf{Performance degradation:} 
  $\beta^{\lambda \eta p}$ where $\eta$ is the fraction of correctly executed samples that have speedup $<1$ (i.e., slowdowns among correct executions), $\beta$ is the geometric mean speedup of these slowdown cases, 
  and $p\!\in\!(0,1)$ is the penalty (default 0.1) applied to emphasize their impact.

\item \textbf{Failures:} 
  $b^{1-\lambda}$ where $1-\lambda$ denotes the fraction of samples that execute incorrectly 
  (tolerance violations, compilation failures, or runtime crashes), and $b \in (0,1)$ is the penalty factor (default 0.1).
\end{itemize}

The equivalence between the macro formulation and its sample-level interpretation is formally proven in Proposition~\ref{prop:equivalence-St-GMRS} (Appendix~\ref{sec:appendix-analysis-st}).

\textbf{Experiment Result:}  
We conducted experiments on the NVIDIA H20 GPU using CV and NLP samples from GraphNet, evaluating \textbf{CINN} (the default compiler backend of PaddlePaddle) and \textbf{TorchInductor} (the default compiler backend of PyTorch) on their respective frameworks. As the workloads originate from their native ecosystems, we refer to the two settings as \textbf{PaddlePaddle} and \textbf{PyTorch} in the following discussion. Detailed experimental settings, including framework versions, are summarized in Appendix~\ref{sec:benchmark-details}.

\begin{figure}[htpb]
    \centering
    \captionsetup{font=small}
    \includegraphics[width=0.75\linewidth]{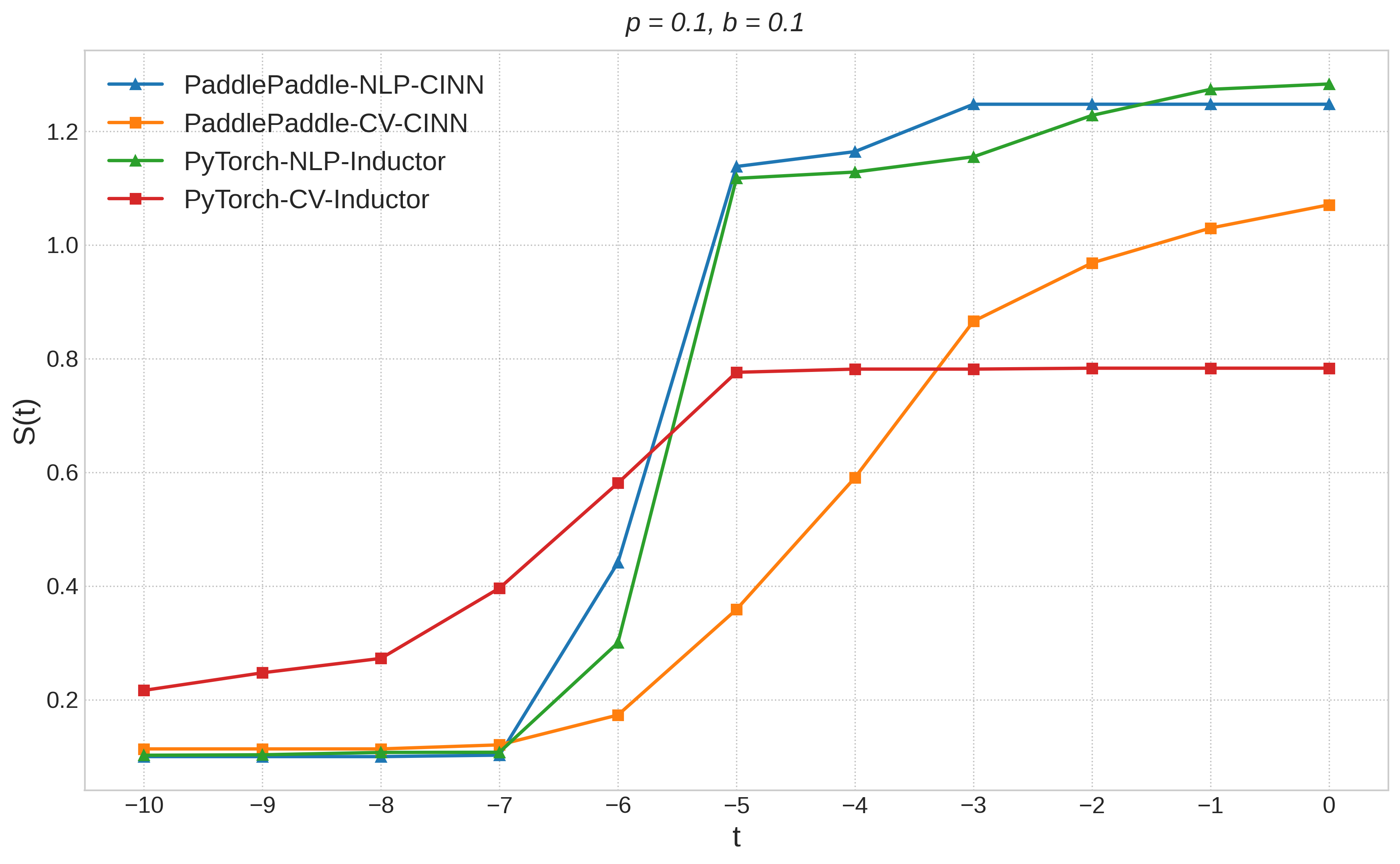}
    \caption{\textbf{Speedup Score $S_t$ on NVIDIA H20 for CV and NLP workloads.} The vertical axis shows $S_t$, which integrates speedup, pass rate, and failure penalties into a unified score. Higher $S_t$ indicates better compiler performance under the correctness-aware speedup metric defined in Equation~\ref{eq:St-macro-form}.
The horizontal axis $t$ represents different numerical tolerance levels used for correctness checks, where larger $t$ implies more relaxed thresholds.}
    \label{fig:experiment-St}
\end{figure}

As shown in Figure~\ref{fig:experiment-St}, the $S_t$ trajectories exhibit distinct trends across workloads and compiler configurations. For both PaddlePaddle-NLP (blue) and PyTorch-NLP (green), $S_t$ remains low under strict tolerances ($t \leq -7$) but rises sharply within $t \in [-6, -5]$, indicating that most samples pass the correctness check once the tolerance is slightly relaxed. Both compilers converge to similar peak values around $S_t \approx 1.2$ at $t = 0$, with PaddlePaddle achieving a marginally higher score.

In contrast, PaddlePaddle-CV (orange) exhibits a smoother performance increase from $t=-7$ to $t=0$, whereas PyTorch-CV (red) reaches a peak $S_t \approx 0.8$ at $t=-5$. and then plateaus. This flat trajectory suggests that increasing the tolerance further (beyond $t=-5$) no longer improves the score by resolving accuracy issues, which implies that its accuracy-related failures are effectively resolved at this point. However, its performance score remains constrained by an upper bound well below 1.0, indicating heavy penalties from either widespread execution failures or performance degradation (slowdowns). While the $S_t$ score reflects this combined impact, it does not separate these root causes. This breakdown is provided by the development metrics in the following section.

\subsubsection{Metrics for Compiler Development}
\label{subsubsec:metrics-development}

For compiler developers, error information from incorrectly executed samples is crucial. To encode this information into the evaluation metric, we reinterpret the positive tolerance domain $t\in(0,+\infty)$ to represent discrete levels of error tolerance. We assign error codes $c \in \{1, 2, 3\}$ to accuracy errors, runtime crashes, and compilation failures, respectively. The metric then tolerates errors based on the level $t$: $t \ge 1$ tolerates accuracy errors ($c=1$), $t \ge 2$ additionally tolerates runtime crashes ($c=2$), and $t \ge 3$ tolerates all failures ($c=3$).

Based on these discrete tolerance levels and error codes,
we extend the fixed penalty factor $b$ in $S_t$ (Eq.~\ref{eq:St-macro-form})
to a tolerance-dependent form $\gamma_t$:
\begin{equation}
\gamma_t = \prod_c (b^{\mathbb{1}(t<c)})^{\pi_c}
          = b^{\sum_c \pi_c \, \mathbb{1}(t<c)}
\label{eq:gamma_def}
\end{equation}
where $c\!\in\!\{1,2,3\}$ is the error code,
$\pi_c$ denotes the proportion of error code $c$ among all erroneous samples,
and $\mathbb{1}(\cdot)$ is an indicator function.
As $t$ increases, more error types are tolerated and $\gamma_t$ monotonically increases from $b$ to $1$.
With $\gamma_t$ defined, we obtain the \textbf{Error-aware Speedup Score}:
\begin{equation}
ES_t = \alpha^{\lambda} \cdot \beta^{\lambda \eta p} \cdot \gamma_t^{1-\lambda}
\label{eq:ESt-macro-form}
\end{equation}
For $t\le0$, $ES_t$ reduces to the original $S_t$.
As $t$ grows over the positive domain,
more categories of errors are tolerated and $ES_t$ increases monotonically.
When all errors are tolerated ($t\ge3$), $ES_t$ reaches its maximum,
representing the theoretical upper bound of achievable compiler performance.

Appendix~\ref{sec:appendix-analysis-ESt} analyzes the sample-level meaning of $ES_t$.

\textbf{Experiment Result:} We obtain $ES_t$ under the same conditions as in Section~\ref{subsubsec:metrics-benchmark}, as shown in Figure~\ref{fig:experiment-ESt}.

\begin{figure}[htpb]
    \centering
    \captionsetup{font=small}
    \includegraphics[width=0.75\linewidth]{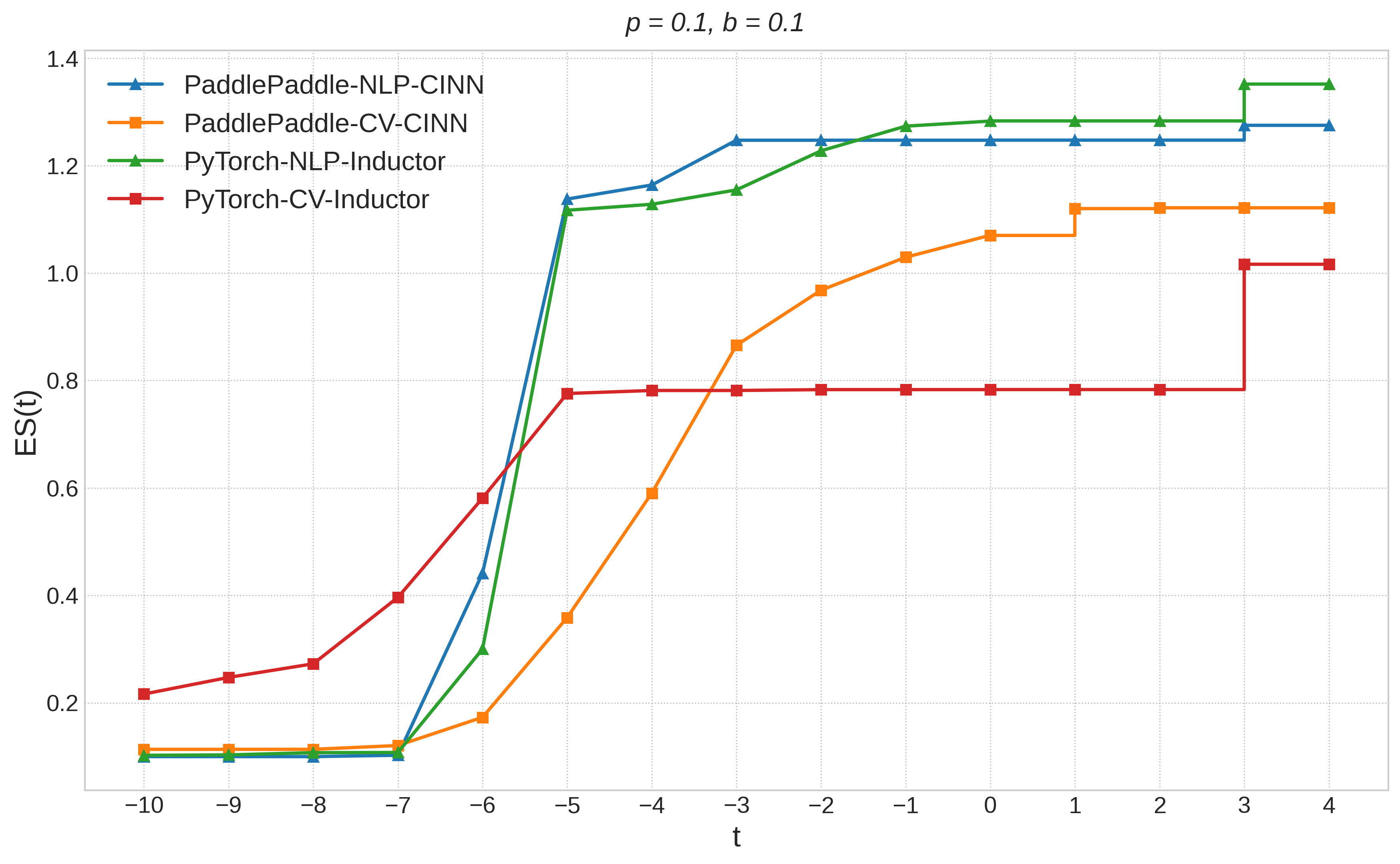}
    \caption{
\textbf{Error-aware Speedup Score $ES_t$ for CV and NLP workloads.} The vertical axis shows $ES_t$ values, capturing compiler performance with increasing fault tolerance.
Higher $ES_t$ indicates better compiler performance under the error-tolerant speedup metric defined in Equation~\ref{eq:ESt-macro-form}.
The horizontal axis $t$ represents different tolerance levels:
for $t \le 0$, it reflects numerical correctness thresholds (as in $S_t$);
for $t > 0$, it encodes the categories of tolerated errors:
$t \geq 1$ tolerates accuracy mismatches,
$t \geq 2$ tolerates runtime crashes,
and $t \geq 3$ tolerates compilation failures.
}
    \label{fig:experiment-ESt}
\end{figure}

In Figure~\ref{fig:experiment-ESt}, PaddlePaddle-CV (orange) exhibits a discrete jump from $t=0$ to $t=1$, indicating that a portion of its failures are due to substantial accuracy violations that are tolerated once $t \ge 1$. Jumps also occur at $t=3$ for both PyTorch-NLP (green) and PaddlePaddle-NLP (blue), revealing that a fraction of samples fail due to compilation errors. Notably, PyTorch-CV (red) shows a sharp rise at $t=3$, suggesting that a large number of its samples do not pass the compilation stage.

Finally, when $t \geq 3$, $ES_t$ no longer incorporates any penalties for accuracy violations, runtime crashes, or compilation errors, and thus reflects the raw speedup score the theoretical upper bound of compiler performance. PyTorch-CV ultimately plateaus around $ES_t \approx 1.0$, indicating that even with all failures ignored, its average performance is neutral. This suggests many of its samples are negatively optimized (i.e., compilation leads to slower execution). As shown in Figure~\ref{fig:experiment-violinplot} (Appendix~\ref{sec:benchmark-details}), the distribution of per-sample speedups confirms that PyTorch-CV contains a substantial number of negatively optimized samples.

While $S_t$ and $ES_t$ provide an aggregated view of compiler performance, Tables~\ref{tab:paddle-nlp-values}--\ref{tab:pytorch-cv-values} in the appendix provide detailed values of each component ($\alpha, \beta, \lambda, \eta, \gamma$), offering a fine-grained view of how $S_t$ and $ES_t$ are constructed.

\section{Construction of GraphNet}
\label{sec:cons-graphnet}
GraphNet provides a unified workflow for automated graph extraction, validation, and cross-backend performance evaluation. This section details the dataset requirements and construction methodology.
\subsection{Dataset Constraints}
\label{subsec:constraints}
We define five constraints applied to every computational graph in GraphNet to ensure overall dataset quality and cross-platform compatibility. These constraints act as validation requirements during dataset construction, ensuring consistency and usability at the source. Specifically, all computational graphs must satisfy the following constraints:
\begin{itemize}
    \item \textbf{Runnable}:  Each computational graph must successfully execute forward propagation under the designated framework without syntax errors, type mismatches, or runtime crashes.
    \item \textbf{Serializable}: Each sample and its associated metadata (e.g., input shapes, weight parameters) must be serializable into standard formats (e.g., JSON) and correctly de-serializable upon reloading.
    
    \item \textbf{Decomposable}: The entire computational graph must be decomposable into multiple non-overlapping subgraphs, where each subgraph represents an independent optimization unit. This supports compiler backends in performing fusion, scheduling, and other optimization tasks.
    
    \item \textbf{Statically Analyzable}: The name, type, attributes, and dependency relations of all operators must be statically extractable (e.g., via torch.fx) without model execution. This allows automated analysis tools to fully interpret operator semantics for structural traversal and pattern matching.
    
    \item \textbf{Custom Operator Accessible}: If a sample includes user-defined custom operators, the corresponding source code for these operators must be traceable and accessible in a modular form, ensuring reusability and integration across compiler environments. 
    \end{itemize}
\textbf{Remark:} At the time of writing, the first three constraints (Runnable, Serializable, and Statically Analyzable) are strictly enforced by our current validation pipeline. The remaining two (Decomposable and Custom Operator Accessible) are actively under development and represent our roadmap for expanding the dataset’s capabilities.
\subsection{Construction Methodology}
\begin{figure}[htpb]
    \centering
    \captionsetup{font=small}
    \includegraphics[width=0.75\linewidth]{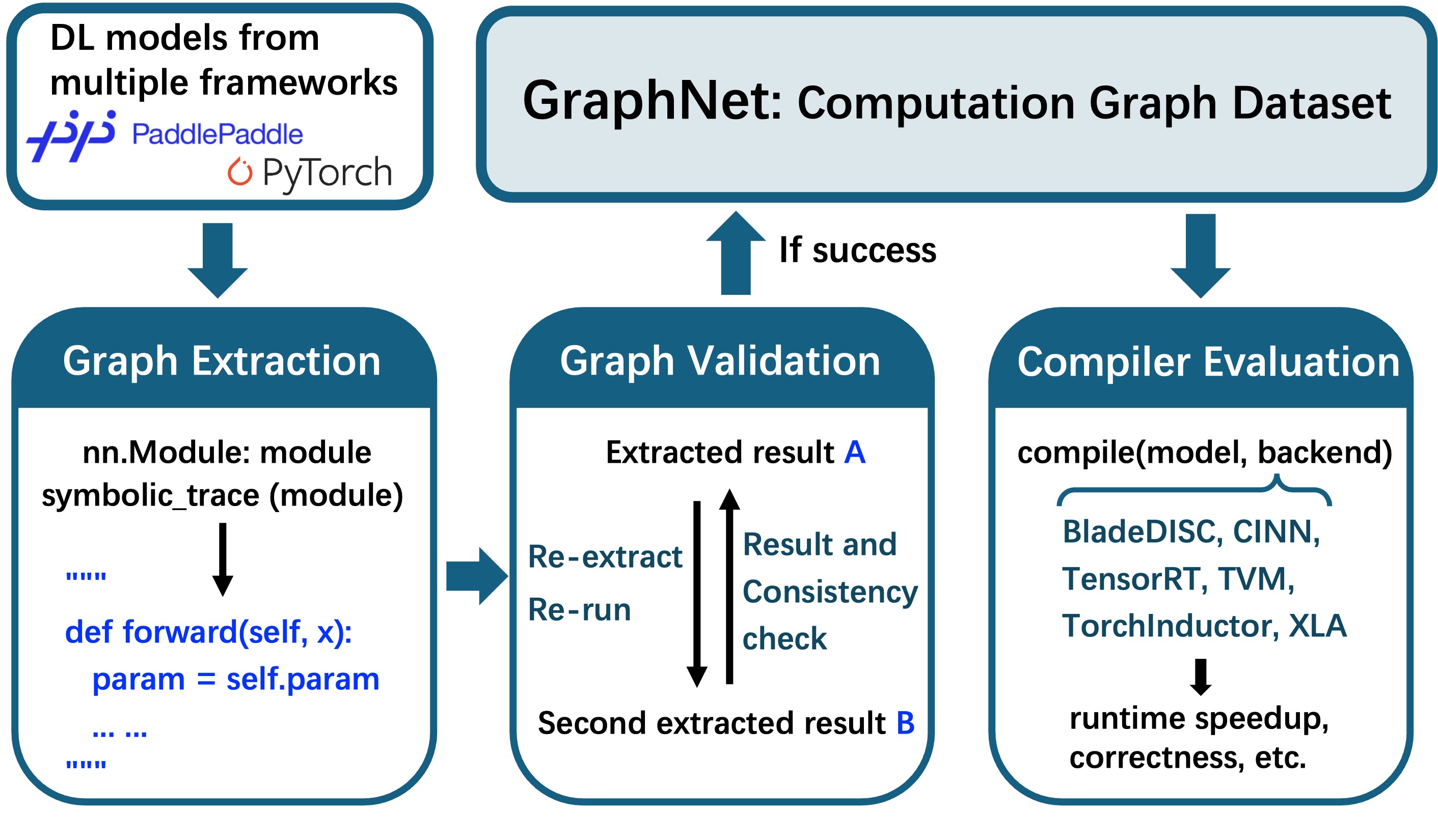}
    \caption{\textbf{GraphNet Workflow Overview.} 
    The workflow consists of three stages: 
    (1) \textbf{Graph Extraction}: Traces and captures computational graphs from DL models (e.g., PaddlePaddle, PyTorch); 
    (2) \textbf{Graph Validation}: Performs a consistency check via re-extraction and re-execution to ensure usability; 
    and (3) \textbf{Compiler Evaluation}: Uses the validated graphs from the dataset to benchmark the runtime speedup and correctness of various compiler backends.}
    \label{fig:GraphNet_workflow}
\end{figure}
GraphNet provides an automated workflow for collecting computational graphs across multiple platforms  while enforcing the constraints introduced in Section~\ref{subsec:constraints}. The dataset construction pipeline, illustrated in Figure~\ref{fig:GraphNet_workflow}, consists of two primary components: graph extraction and graph validation.

\subsubsection{Graph Extraction}
We design a lightweight extraction mechanism that captures dynamic computational graphs from real models and saves them as standardized GraphNet samples. As illustrated in the "Graph Extraction" component of Figure~\ref{fig:GraphNet_workflow}, we first create Python decorator-based interfaces, including \texttt{graph\_net.paddle.extract} and \texttt{graph\_net.torch.extract}. Users can wrap the target model with the extractor, which automatically triggers the graph extraction process at runtime. During execution, the extractor employs symbolic tracing and dynamic graph tracking mechanisms built into the framework to capture all operator invocations and tensor dependencies, thereby generating a complete dynamic computational graph. The captured graph is stored as a standardized set of files (as shown in Figure~\ref{fig:GraphNet_composition}), which encompasses the high-level IR of the computational graph in \texttt{model.py}, weights and input metadata, and optional custom operator implementations. Together, these components form a complete GraphNet sample.

\subsubsection{Graph Validation}
To ensure that the extracted data meets the dataset constraints, we adopt a re-extraction and re-execution mechanism during the validation stage. This workflow, illustrated in the center of Figure~\ref{fig:GraphNet_workflow}, consists of four steps:
First, the validator takes an extracted sample~A (the original computational graph) and deserializes it into an executable Python function, reconstructing the model structure with its input and weight metadata.
Second, the reconstructed model is executed to verify that the computational graph is runnable.
Third, the validator performs the extraction process again on the reconstructed model, producing a second computational graph~B.
Finally, the validator repeats the reconstruction and execution on graph~B, comparing its outputs with those from the second stage and checking for structural and node-level consistency between graphs~A and~B.

This validation procedure inherently enforces all dataset constraints. Failures in serialization or deserialization terminate the reconstruction process; non-runnable models are detected during execution; and inaccessible custom operator code interrupts re-extraction. In addition, the consistency check guarantees that static analysis of all operators is performed completely and correctly.

GraphNet also incorporates a graph deduplication mechanism during data collection to eliminate redundancy. For each extracted computational graph, a unique \textit{graph hash} value is generated from the model’s source code and graph topology. The validator then identifies and removes duplicate samples by comparing their hash values, ensuring that only distinct computational graphs are retained in the final dataset.
\begin{figure}[htpb]
    \centering
    \captionsetup{font=small}
    \includegraphics[width=0.75\linewidth]{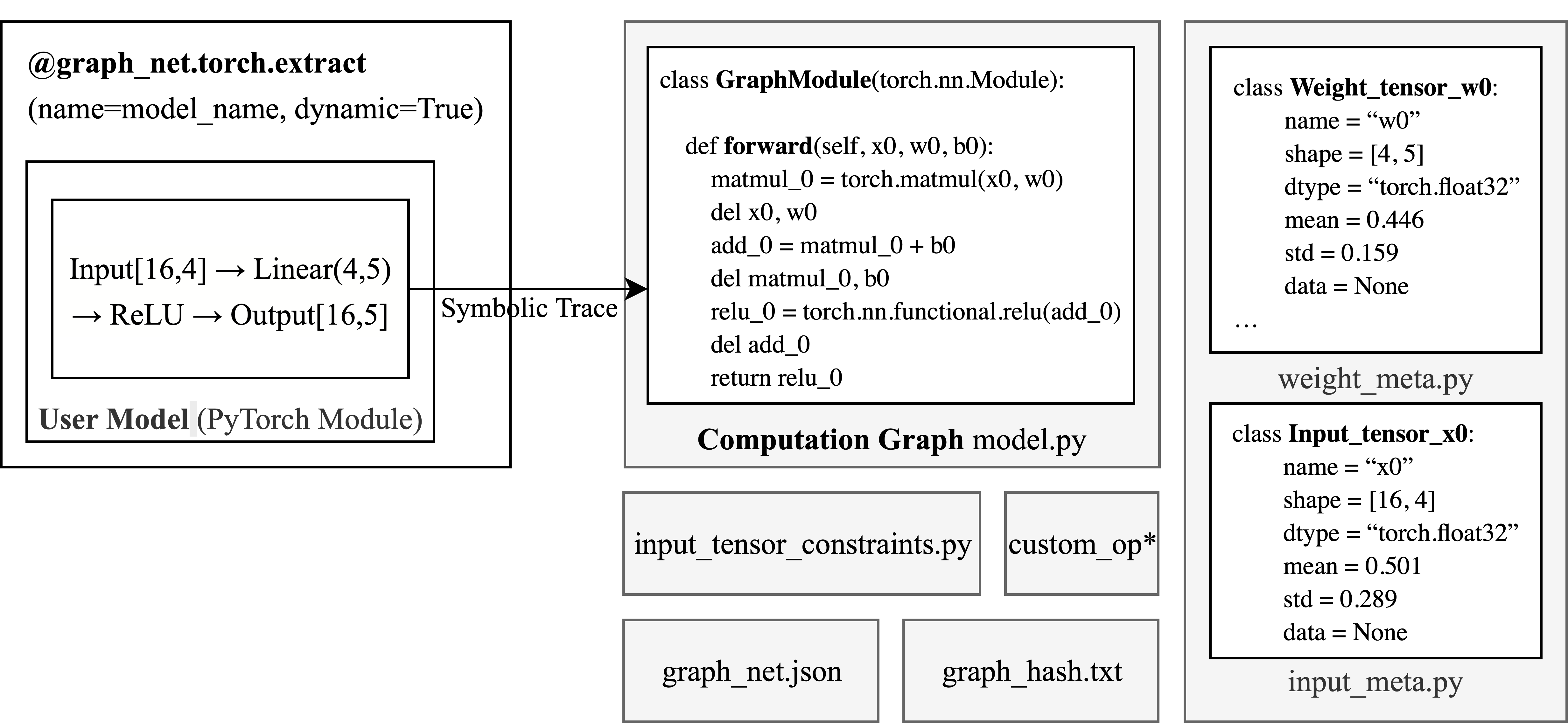}
    \caption{\textbf{GraphNet Sample Composition.}
    A user's model (left), wrapped by the \texttt{@graph\_net} extractor, is symbolically traced to generate a standardized set of files. 
    This set forms a complete sample, including the high-level IR of the computation graph (\texttt{model.py}), metadata for inputs and weights (\texttt{input\_meta.py}, \texttt{weight\_meta.py}), and other components such as optional custom operator code.}
    \label{fig:GraphNet_composition}
\end{figure}

\section{Background and Related Works}
\label{sec:background}
\textbf{Tensor Compiler:}  
Tensor compilers\cite{Li_2021} transform high-level computation graphs into optimized device-specific kernels through IR lowering and scheduling. Systems such as TVM ~\cite{TVM} and Ansor~\cite{Ansor} represent search-based compilers that rely on auto-tuning with cost models, while XLA~\cite{XLA} and Glow~\cite{glowgraphloweringcompiler} follow a heuristic-based approach with graph-level optimizations. Recent works include MetaSchedule~\cite{MetaSchedule}, Hidet~\cite{hidet}, and BladeDISC~\cite{BladeDISC}, which improve scheduling abstraction, tuning efficiency, and dynamic-shape support. Industry systems include framework-specific compilers such as CINN (PaddlePaddle) and TorchInductor (PyTorch), which are tightly integrated into their respective DL frameworks, as well as vendor-specific tools like TensorRT, which are specialized for the NVIDIA ecosystem.

\textbf{Performance Benchmarks:}   
DL model performance evaluation can be traced back to DeepBench~\cite{deepbench}, which measured the performance of fundamental operations such as matrix multiplication across different hardware platforms. This direction was later extended by the industry-wide MLPerf~\cite{mattson2020mlperf} suite, standardizing end-to-end evaluation of training and inference workloads. More recently, CompilerGym~\cite{compilergym} introduced benchmark datasets and reinforcement learning environments aimed at leveraging ML to improve compiler optimization, while KernelBench~\cite{ouyang2025kernelbench} proposed a benchmark for evaluating correctness and speedup of LLM-generated GPU kernels. In parallel, Furutanpey et al.\cite{leveragingneuralgraphcompilers} evaluated graph compilers across heterogeneous settings and introduced NGraphBench to address the gap between theoretical performance and actual deployment, emphasizing the importance of incorporating compiler effects into ML research.

\section{Conclusion and Future Work}
\label{sec:conclusion}

In this paper, we introduced GraphNet along with $S_t $ and $ ES_t$ metrics to enable reproducible evaluation of tensor compilers across tasks and frameworks. GraphNet provides an open and well-structured resource for kernel research at the computational graph level, ensuring broad coverage of real-world workloads. Through experiments, we showed that these tools allow researchers to gain an objective and comprehensive perspective on compiler optimization capability and to better identify potential performance bottlenecks.

Our future roadmap focuses on expanding GraphNet to better serve compiler researchers and developers:

\subsection{Completing GraphNet}
\textbf{Framework Expansion}  
GraphNet currently supports only two deep learning frameworks, and its evaluation is limited to NVIDIA GPUs (e.g., H20, A100). We plan to expand the dataset by adding more samples from additional frameworks such as TensorFlow, JAX, and MindSpore, and extend evaluation to a wider range of hardware platforms, including TPUs and NPUs. 

\textbf{Task Category Refinement}  
We will introduce more fine-grained categories within the existing six major domains (CV, NLP, Audio, Multimodal, Scientific Computing, and Other). This will enable more targeted evaluation of compilers on specific application scenarios. 

\textbf{Sample Feature Enhancement}  
We aim to enrich sample features by enhancing our graph decomposition tools, allowing full graphs to be more easily split into disjoint subgraphs. We also seek to broaden support for more complex custom operators to preserve model-specific functionality.

\textbf{Distributed Scenario Support}
Finally, we will extend GraphNet to incorporate distributed computing scenarios, so that GraphNet can capture computation graphs with communication operators and support the evaluation of compiler optimizations in large-scale distributed systems.


\subsection{Applying GraphNet}

\textbf{Systematic Compiler Evaluation}  
Beyond the limited experiment on CINN and TorchInductor, GraphNet can be extended to benchmark a broader range of tensor compilers under a unified metric. From the user’s perspective, this enables users to select compilers based on task categories and framework requirements. From the compiler developer’s perspective, it helps developers quickly identify worst-case scenarios and uncover optimization bottlenecks.

\textbf{High-level IR Translation}  GraphNet includes computation graphs from multiple deep learning frameworks, providing a foundation for high-level IR translation. Such translation unifies graphs across ecosystems, ensuring that compiler evaluations are based on fully aligned datasets.

\textbf{AI for Compiler Research}
GraphNet can serve as training and evaluation data for AI-generated compiler passes and kernels. These AI-generated optimizations can be directly applied to GraphNet samples and systematically compared with existing compiler backends under the $ES_t$ metric. This enables a fair measurement of their speedup and correctness, and positions GraphNet as a benchmark platform for advancing AI-driven compiler research.


\section{Acknowledgment}
\label{sec:acknowledgment}
We thank the developers from the PaddlePaddle community for their invaluable support in constructing GraphNet. 
In particular, we especially acknowledge Zichao Xia, Ruqi Yang for their key contributions, and we also gratefully recognize the efforts of Guoyong Fang, Mengyuan Liu, Min Li, Shun Liu, Xin Wang, Xujun Chen, Yimeng Xu, Yiqiao Zhang, and Zeping Wu.


\appendix

\bibliography{ref}


\newpage

\section*{\centering Appendix}

\section{Configuration of atol(t) and rtol(t)}
\label{sec:appendix_atol2rtol} 


\begin{table}[h]
\centering
\renewcommand{\arraystretch}{1.5}

\begin{minipage}[t]{0.48\textwidth}
\centering
\begin{tabular}{|c||c|c|c|}
\hline
\textbf{Data Type} & \textbf{atol(t)} & \textbf{atol}(-5) & \textbf{atol}(0)  \\
\hline \hline
float16    &  $10^{t}$   & 1e-5   &  $1$    \\ \hline
bfloat16   &  $10^{t}$   & 1e-5   &  $1$    \\ \hline
float32    &  $10^{t}$   & 1e-5   &  $1$    \\ \hline
float64    &  $10^{t\cdot7/5}$ & 1e-7   &  $1$    \\ \hline
complex32  &  $10^{t}$   & 1e-5   &  $1$    \\ \hline
complex64  &  $10^{t}$   & 1e-5   &  $1$    \\ \hline
complex128 &  $10^{t\cdot7/5}$   & 1e-7   &  $1$   \\ \hline 
quint8     &  $10^{t}$   & 1e-5   &  $1$    \\ \hline
quint2x4   &  $10^{t}$   & 1e-5   &  $1$    \\ \hline
quint4x2   &  $10^{t}$   & 1e-5   &  $1$    \\ \hline
qint8      &  $10^{t}$   & 1e-5   &  $1$    \\ \hline
qint32     &  $10^{t}$   & 1e-5   &  $1$    \\ \hline
others     &  $0.0$   & $0.0$         &  $0.0$  \\
\hline
\end{tabular}
\caption{atol configuration}
\end{minipage}
\hfill
\begin{minipage}[t]{0.48\textwidth}
\centering
\begin{tabular}{|c||c|c|c|}
\hline
\textbf{Data Type} & \textbf{rtol(t)} & \textbf{rtol}(-5) & \textbf{rtol}(0)  \\
\hline \hline
float16     & $10^{t\cdot3/5}$         & 1e-3       & $1$    \\ \hline
bfloat16    & $10^{t\cdot1.796/5}$         & 1.6e-2   & $1$  \\ \hline  
float32     & $10^{t\cdot5.886/5}$         & 1.3e-6   & $1$  \\ \hline  
float64     & $10^{t\cdot7/5}$         & 1e-7       & $1$    \\ \hline
complex32   & $10^{t\cdot3/5}$         & 1e-3       & $1$    \\ \hline
complex64   & $10^{t\cdot5.886/5}$         & 1.3e-6   & $1$  \\ \hline  
complex128  & $10^{t\cdot7/5}$         & 1e-7       & $1$    \\ \hline
quint8      & $10^{t\cdot5.886/5}$         & 1.3e-6   & $1$  \\ \hline  
quint2x4    & $10^{t\cdot5.886/5}$         & 1.3e-6   & $1$  \\ \hline  
quint4x2    & $10^{t\cdot5.886/5}$         & 1.3e-6   & $1$  \\ \hline  
qint8       & $10^{t\cdot5.886/5}$         & 1.3e-6   & $1$  \\ \hline  
qint32      & $10^{t\cdot5.886/5}$         & 1.3e-6   & $1$  \\ \hline  
others      & $0.0$       & $0.0$           & $0.0$  \\
\hline
\end{tabular}
\caption{rtol configuration}
\end{minipage}
\end{table}

Building on PyTorch's default testing settings, we develop a log-linear interpolation scheme to construct a (atol, rtol) configuration table. Specifically, we perform log-linear interpolation between two reference points, for instance, $\mathrm{atol}_{\mathrm{fp32}}(-5) = 10^{-5}$ and $\mathrm{atol}_{\mathrm{fp32}}(0) = 1$. In essence, this means that $\lg(\mathrm{atol}(t))$ and $\lg(\mathrm{rtol}(t))$ vary linearly with $t$, leading to the unified representation $\mathrm{atol}(t),\mathrm{rtol}(t) = 10^{kt}$. This formulation provides a coherent framework that encompasses floating-point (float16, bfloat16, float32, float64), complex, and quantized integer types. As a result, the tolerance $t \in (-\infty, 0]$ is mapped smoothly onto tolerance bounds across diverse precisions.

\newpage
\section{Sample-level Interpretation of \texorpdfstring{$S_t$}{St}}
\label{sec:appendix-analysis-st}

In this section, we show that the macro-level Speedup Score $S_t$ (Eq.~\ref{eq:St-macro-form}) can be equivalently expressed as the geometric mean of per-sample rectified speedups.

\begin{definition}[Rectified Speedup]
\label{def:rectified-speedup}
For each test sample $i$, the \emph{rectified speedup} under tolerance $t$ is defined as:
\begin{equation}
\tilde{s}_{t,i} =
\begin{cases}
  s_i, & \text{if } \mathrm{correct}_{t,i} \land s_i \ge 1, \\[3pt]
  s_i^{\,p+1}, & \text{if } \mathrm{correct}_{t,i} \land s_i < 1, \\[3pt]
  b, & \text{if } \neg \mathrm{correct}_{t,i},
\end{cases}
\label{eq:rectified_speedup}
\end{equation}
where \(s_i\) denotes the raw speedup ratio, \(\mathrm{correct}_{t,i}\) indicates whether the execution satisfies the tolerance criterion \(t\), \(p \in (0,1)\) is the degradation penalty coefficient, and \(b \in (0,1)\) is the failure penalty.
This formulation ensures that:
(i) correct executions with speedup retain their measured gain;
(ii) correct executions with slowdown are exponentially penalized; and
(iii) failed executions incur a fixed penalty.
\end{definition}

We further define the \emph{Geometric Mean Rectified Speedup (GMRS)} as:
\begin{equation}
GMRS_t = \left(\prod_{i=1}^{N} \tilde{s}_{t,i}\right)^{1/N}
\label{eq:gmrs}
\end{equation}
where \(N\) is the total number of evaluated test samples.

\begin{proposition}
\label{prop:equivalence-St-GMRS}
The macro-level Speedup Score $S_t$ defined in Eq.~\ref{eq:St-macro-form} is equivalent to the geometric mean of per-sample rectified speedups:
\[
S_t = \alpha^{\lambda}\cdot \beta^{\lambda \eta p} \cdot b^{1-\lambda} = GMRS_t
\]
\end{proposition}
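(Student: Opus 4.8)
The plan is to prove the identity by direct algebraic manipulation of the product defining $GMRS_t$, exploiting the fact that each factor $\tilde{s}_{t,i}$ takes a uniform form across each of the three outcome classes that also underlie the definitions of $\alpha$, $\beta$, $\lambda$, and $\eta$. First I would partition the $N$ samples into three disjoint sets: the correct-and-accelerated samples $C^{+} = \{i : \mathrm{correct}_{t,i} \land s_i \ge 1\}$, the correct-but-slowed samples $C^{-} = \{i : \mathrm{correct}_{t,i} \land s_i < 1\}$, and the failures $F = \{i : \neg\,\mathrm{correct}_{t,i}\}$. Writing $N_c = |C^+| + |C^-|$, $N_s = |C^-|$, and $N_f = |F|$, the defining fractions become $\lambda = N_c/N$, $\eta = N_s/N_c$, and $1-\lambda = N_f/N$, while the geometric means satisfy $\alpha^{N_c} = \prod_{i \in C^+ \cup C^-} s_i$ and $\beta^{N_s} = \prod_{i \in C^-} s_i$.

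Next I would split the total product of rectified speedups along this partition, substituting the three cases of Eq.~\eqref{eq:rectified_speedup}:
\[
\prod_{i=1}^{N} \tilde{s}_{t,i}
= \Bigl(\prod_{i \in C^+} s_i\Bigr)
  \cdot \Bigl(\prod_{i \in C^-} s_i^{\,p+1}\Bigr)
  \cdot \Bigl(\prod_{i \in F} b\Bigr).
\]
The third factor is simply $b^{N_f}$ and the second is $\beta^{N_s(p+1)}$. The crucial step is the first factor: since $\alpha$ is the geometric mean over \emph{all} correct samples rather than only the accelerated ones, I would eliminate $C^+$ by writing $\prod_{i \in C^+} s_i = \alpha^{N_c}/\beta^{N_s}$. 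Collecting the exponents of $\beta$ then produces the cancellation $\beta^{N_s(p+1)-N_s} = \beta^{N_s p}$, so the total product reduces to $\alpha^{N_c}\,\beta^{N_s p}\, b^{N_f}$. Finally I would take the $1/N$-th root and rewrite the count-based exponents via $N_c/N = \lambda$, $N_f/N = 1-\lambda$, and $N_s/N = (N_s/N_c)(N_c/N) = \eta\lambda$, which yields exactly $S_t = \alpha^{\lambda}\,\beta^{\lambda\eta p}\, b^{1-\lambda}$.

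I expect the main obstacle to be the exponent bookkeeping in the first product factor. Because $\alpha$ aggregates every correct sample while $\beta$ re-aggregates only the slowdown subset, the slowdown speedups are counted in both geometric means; the argument hinges on verifying that the $+1$ in the exponent $p+1$ exactly reconstitutes the slowdown contribution already absorbed into $\alpha^{N_c}$, leaving precisely the net penalty exponent $p$ attached to $\beta$. Getting this cancellation right is essentially the entire content of the proof, with everything else being routine. I would also record the degenerate cases $N_c = 0$ or $N_s = 0$: there $\alpha$ or $\beta$ is a geometric mean over an empty set, but the matching exponent ($\lambda$ or $\lambda\eta$) vanishes simultaneously, so the corresponding factor is taken as $1$ by convention and the identity continues to hold.
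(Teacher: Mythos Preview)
Your proposal is correct and follows essentially the same approach as the paper's proof: partition the samples into the three outcome classes, use $\prod_{C^+} s_i = \alpha^{N_c}/\beta^{N_s}$ to eliminate the accelerated-only product, observe the cancellation $\beta^{N_s(p+1)-N_s}=\beta^{N_s p}$, and then convert counts to fractions. Your notation $(N_c,N_s,N_f)$ maps directly onto the paper's $(M,K,N-M)$, and your added remark on the degenerate cases $N_c=0$ or $N_s=0$ is a small refinement the paper omits.
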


\begin{proof}
Consider a benchmark containing \(N\) test samples in total.
Among them, let \(M\) be the number of correctly executed samples, and \(K\) be the subset of those whose raw speedup is less than one (i.e., slowdowns).
Formally, we define:
\[
\begin{aligned}
M &= \#\{i \mid \mathrm{correct}_{t,i}\}, &
\lambda &= \tfrac{M}{N}, \\
K &= \#\{i \mid \mathrm{correct}_{t,i},\, s_i < 1\}, &
\eta &= \tfrac{K}{M}.
\end{aligned}
\]
Thus, the sample space can be partitioned as:
\begin{itemize}
    \item $(M-K)$ correctly executed and accelerated samples (\(s_i \ge 1\));
    \item $K$ correct executions with slowdown (\(s_i < 1\));
    \item $(N-M)$ failed or incorrect samples.
\end{itemize}

Expanding Eq.~\eqref{eq:gmrs} under this partition gives:
\begin{align}
GMRS_t^N
&= \prod_{i: \mathrm{correct},\, s_i \ge 1} s_i
   \cdot \prod_{i: \mathrm{correct},\, s_i < 1} s_i^{p+1}
   \cdot \prod_{i: \neg \mathrm{correct}} b
\label{eq:prod_decomp}
\end{align}

Define the geometric means:
\[
\begin{alignedat}{2}
\alpha &= \left(\prod_{i:\mathrm{correct}} s_i\right)^{1/M}, &\quad
\beta &= \left(\prod_{i:\mathrm{correct},\, s_i < 1} s_i\right)^{1/K}
\end{alignedat}
\]
Hence,
\[
\prod_{i:\mathrm{correct}} s_i = \alpha^M, \quad
\prod_{i:\mathrm{correct},\, s_i < 1} s_i = \beta^K,
\]
and consequently:
\[
\prod_{i:\mathrm{correct},\, s_i \ge 1} s_i = \frac{\alpha^M}{\beta^K}
\]

Substituting into Eq.~\eqref{eq:prod_decomp}:
\begin{align*}
GMRS_t^N
&= \frac{\alpha^M}{\beta^K} \cdot (\beta^K)^{p+1} \cdot b^{N-M} \nonumber\\
&= \alpha^M \cdot \beta^{Kp}\cdot b^{N-M}
\end{align*}
Taking the $N$-th root gives:
\begin{equation*}
GMRS_t = \alpha^{M/N}\cdot \beta^{Kp/N} \cdot b^{(N-M)/N}
\end{equation*}
Substituting $\lambda = M/N$ and $\eta = K/M$ yields:
\begin{equation}
GMRS_t = \alpha^{\lambda}\cdot \beta^{\lambda \eta p} \cdot b^{1-\lambda}
\end{equation}
This matches exactly the macro-level formulation in Eq.~\ref{eq:St-macro-form}, completing the proof.
\end{proof}

\newpage
\section{Sample-level Interpretation of \texorpdfstring{$ES_t$}{ESt}}
\label{sec:appendix-analysis-ESt}

This section provides the sample-level interpretation of the Error-aware Speedup Score $ES_t$.
We show that the aggregated error penalty $\gamma_t$ in Eq.~(\ref{eq:ESt-macro-form})
is equivalent to the geometric mean of per-sample penalty factors,
and that $ES_t$ can be viewed as the geometric mean of per-sample \emph{error-aware rectified speedup}.

\medskip
\begin{definition}
For each erroneous sample $i$, let $c_i\!\in\!\{1,2,3\}$ denote its error code
(\(1\) for accuracy errors, \(2\) for execution crashes, \(3\) for compilation failures).
Given tolerance level $t$, we define the sample-level \emph{penalty factor} as
\begin{equation}
r_{t,i} =
\begin{cases}
b, & t < c_i,\\[3pt]
1, & \text{otherwise},
\end{cases}
\end{equation}
where $b\in(0,1)$ is the base penalty introduced in Section~\ref{subsubsec:metrics-benchmark}.
This factor equals $b$ if the current tolerance level $t$ does not forgive the error type $c_i$,
and $1$ otherwise.
\end{definition}

\smallskip
\begin{proposition}
The aggregated penalty $\gamma_t$ can be written as the geometric mean of $\{r_{t,i}\}$ over all erroneous samples:
\begin{equation}
\gamma_t = \left(\prod_{i=1}^{E} r_{t,i}\right)^{1/E}
\end{equation}
where $E$ is the total number of erroneous samples.
\end{proposition}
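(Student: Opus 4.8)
The plan is to mirror the proof of Proposition~\ref{prop:equivalence-St-GMRS}: rewrite the per-sample penalty factor in exponential form, collapse the product over samples into a single power of $b$, and then regroup the resulting exponent by error code so that the proportions $\pi_c$ reappear. Since both sides are explicit expressions in $b$, the claim reduces to a direct algebraic identity rather than an inequality or limiting argument.

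First I would note that the two-case definition of $r_{t,i}$ admits the compact form $r_{t,i} = b^{\mathbb{1}(t<c_i)}$, because the indicator equals $1$ exactly when $t<c_i$ (yielding $r_{t,i}=b$) and $0$ otherwise (yielding $r_{t,i}=b^{0}=1$). Substituting this into the geometric-mean expression and using the fact that a product of equal bases adds exponents, I would obtain $\prod_{i=1}^{E} r_{t,i} = b^{\sum_{i=1}^{E}\mathbb{1}(t<c_i)}$.

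Next I would partition the $E$ erroneous samples by their error code. Writing $E_c$ for the number of samples carrying code $c\in\{1,2,3\}$, the exponent factors as $\sum_{i=1}^{E}\mathbb{1}(t<c_i)=\sum_{c} E_c\,\mathbb{1}(t<c)$, since every sample with a fixed code $c$ contributes the identical indicator value $\mathbb{1}(t<c)$. Taking the $E$-th root divides this exponent by $E$, and identifying $\pi_c = E_c/E$ as the proportion of code $c$ among all erroneous samples yields $b^{\sum_{c}\pi_c\,\mathbb{1}(t<c)}$, which is precisely the closed form of $\gamma_t$ stated in Eq.~\eqref{eq:gamma_def}.

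I do not expect a substantive obstacle; the only point demanding minor care is the regrouping step, where one must confirm that the per-sample sum is correctly reorganized by error code so that the empirical proportions $\pi_c$ emerge after normalization by $E$. Once this bookkeeping is checked, matching the result against the definition of $\gamma_t$ is immediate and completes the argument.
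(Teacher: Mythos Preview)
Your proposal is correct and follows essentially the same route as the paper's own proof: rewrite $r_{t,i}=b^{\mathbb{1}(t<c_i)}$, collapse the product into $b^{\sum_i \mathbb{1}(t<c_i)}$, regroup the exponent by error code to obtain $\sum_c E_c\,\mathbb{1}(t<c)$, and divide by $E$ so that $\pi_c=E_c/E$ appears. There is no meaningful difference in strategy or detail.
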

\smallskip
\noindent\emph{Proof.}
From the definition of $r_{t,i}$,
each term in the product $\prod_i r_{t,i}$ contributes $b$ iff $t<c_i$:
\[
\prod_{i=1}^{E} r_{t,i}
 = \prod_{i=1}^{E} b^{\mathbb{1}(t<c_i)}
 = b^{\sum_{i=1}^{E}\mathbb{1}(t<c_i)}
\]
Grouping by error code $c$ and letting $E_c$ denote the number of samples with code $c$, we have
\[
\sum_{i=1}^{E}\mathbb{1}(t<c_i)
   = \sum_{c=1}^{3}E_c\,\mathbb{1}(t<c)
\]
thus
\[
\prod_{i=1}^{E} r_{t,i}
   = b^{\sum_{c=1}^{3}E_c\,\mathbb{1}(t<c)}
\]
Taking the $E$-th root gives
\[
\left(\prod_{i=1}^{E} r_{t,i}\right)^{1/E}
   = b^{\frac{1}{E}\sum_{c=1}^{3}E_c\,\mathbb{1}(t<c)}
   = b^{\sum_{c=1}^{3}\pi_c\,\mathbb{1}(t<c)}
\]
where $\pi_c=E_c/E$ is the fraction of error code $c$ among all erroneous samples.
The right-hand side is exactly the definition of $\gamma_t$ in Eq.~(\ref{eq:gamma_def}).
\hfill$\square$

\medskip
\begin{definition}(Error-aware rectified speedup).
Extending the rectified speedup defined in Appendix~\ref{sec:appendix-analysis-st},
we introduce the per-sample \emph{error-aware rectified speedup}:
\begin{equation}
\text{error\_aware\_rectified\_speedup}_{t,i} =
\begin{cases}
\mathrm{speedup}_i, & \mathrm{correct}_{t,i}\land \mathrm{speedup}_i\!\ge1,\\[3pt]
\mathrm{speedup}_i^{p+1}, & \mathrm{correct}_{t,i}\land \mathrm{speedup}_i\!<1,\\[3pt]
r_{t,i}, & \text{otherwise.}
\end{cases}
\end{equation}
\end{definition}
\begin{proposition}
Since $\gamma_t$ acts as the geometric mean of $\{r_{t,i}\}$ for a given $t$,
the macro-level metric $ES_t$ in Eq.~(\ref{eq:ESt-macro-form}) can be trivially proved as the geometric mean of
these error-aware rectified speedup values:
\begin{equation}
ES_t = \left(\prod_{i=1}^{N}\text{error\_aware\_rectified\_speedup}_{t,i}\right)^{1/N}
\end{equation}
\end{proposition}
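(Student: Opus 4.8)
The plan is to mirror the proof of Proposition~\ref{prop:equivalence-St-GMRS} almost verbatim, since the error-aware rectified speedup differs from the rectified speedup of Definition~\ref{def:rectified-speedup} only in its third branch, where the fixed penalty $b$ is replaced by the per-sample factor $r_{t,i}$. Accordingly, I would keep the same three-way partition of the $N$ samples — $(M-K)$ correct-and-accelerated, $K$ correct-and-slowdown, and $E=N-M$ erroneous — and reuse the algebra on the two correct groups unchanged, isolating all of the new work in the erroneous group.

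First I would expand the product $\prod_{i=1}^{N}\text{error\_aware\_rectified\_speedup}_{t,i}$ over this partition. The two correct branches contribute exactly the terms already computed in the $S_t$ proof: using the geometric-mean definitions of $\alpha$ and $\beta$ together with the identity $\prod_{i:\mathrm{correct},\,s_i\ge1}s_i = \alpha^M/\beta^K$, they combine to $\prod_{i:\mathrm{correct},\,s_i\ge1}s_i \cdot \prod_{i:\mathrm{correct},\,s_i<1}s_i^{p+1} = \alpha^M \beta^{Kp}$. The erroneous branch now contributes $\prod_{i:\neg\mathrm{correct}} r_{t,i}$ in place of the previous $b^{N-M}$; here I would invoke the preceding proposition of this appendix, which establishes $\prod_{i=1}^{E} r_{t,i} = \gamma_t^{E}$, to rewrite this factor as $\gamma_t^{E} = \gamma_t^{N-M}$.

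Combining the three groups then gives $\prod_{i=1}^{N}\text{error\_aware\_rectified\_speedup}_{t,i} = \alpha^M \beta^{Kp}\gamma_t^{N-M}$. Taking the $N$-th root and substituting $\lambda=M/N$, $\eta=K/M$, and $1-\lambda=(N-M)/N$ yields $\alpha^{\lambda}\beta^{\lambda\eta p}\gamma_t^{1-\lambda}$, which is precisely $ES_t$ as defined in Eq.~\eqref{eq:ESt-macro-form}.

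There is no genuine obstacle here — the statement itself flags the result as trivial — and the only point requiring care is bookkeeping: recognizing that $E=N-M$, so that the exponent of $\gamma_t$ matches the $1-\lambda$ in the macro formula, and that the previously proven geometric-mean identity for $\gamma_t$ is exactly what collapses the heterogeneous per-sample penalties $\{r_{t,i}\}$ into a single aggregated factor raised to the count of erroneous samples. Everything else is inherited directly from Proposition~\ref{prop:equivalence-St-GMRS}.
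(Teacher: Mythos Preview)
Your proposal is correct and matches the paper's intended approach: the paper does not spell out a proof for this proposition, noting only that it follows trivially from the geometric-mean identity for $\gamma_t$ together with the argument of Proposition~\ref{prop:equivalence-St-GMRS}. Your write-up is exactly that implied argument carried out in full, with the three-way partition, the reuse of $\alpha^M\beta^{Kp}$ on the correct branches, and the substitution $\prod_{i:\neg\mathrm{correct}} r_{t,i}=\gamma_t^{N-M}$ on the erroneous branch.
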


\newpage
\section{Details of Benchmark Experiments}
\label{sec:benchmark-details}
\begin{figure}[h]
    \centering
    \captionsetup{font=small}
    \includegraphics[width=0.7\linewidth]{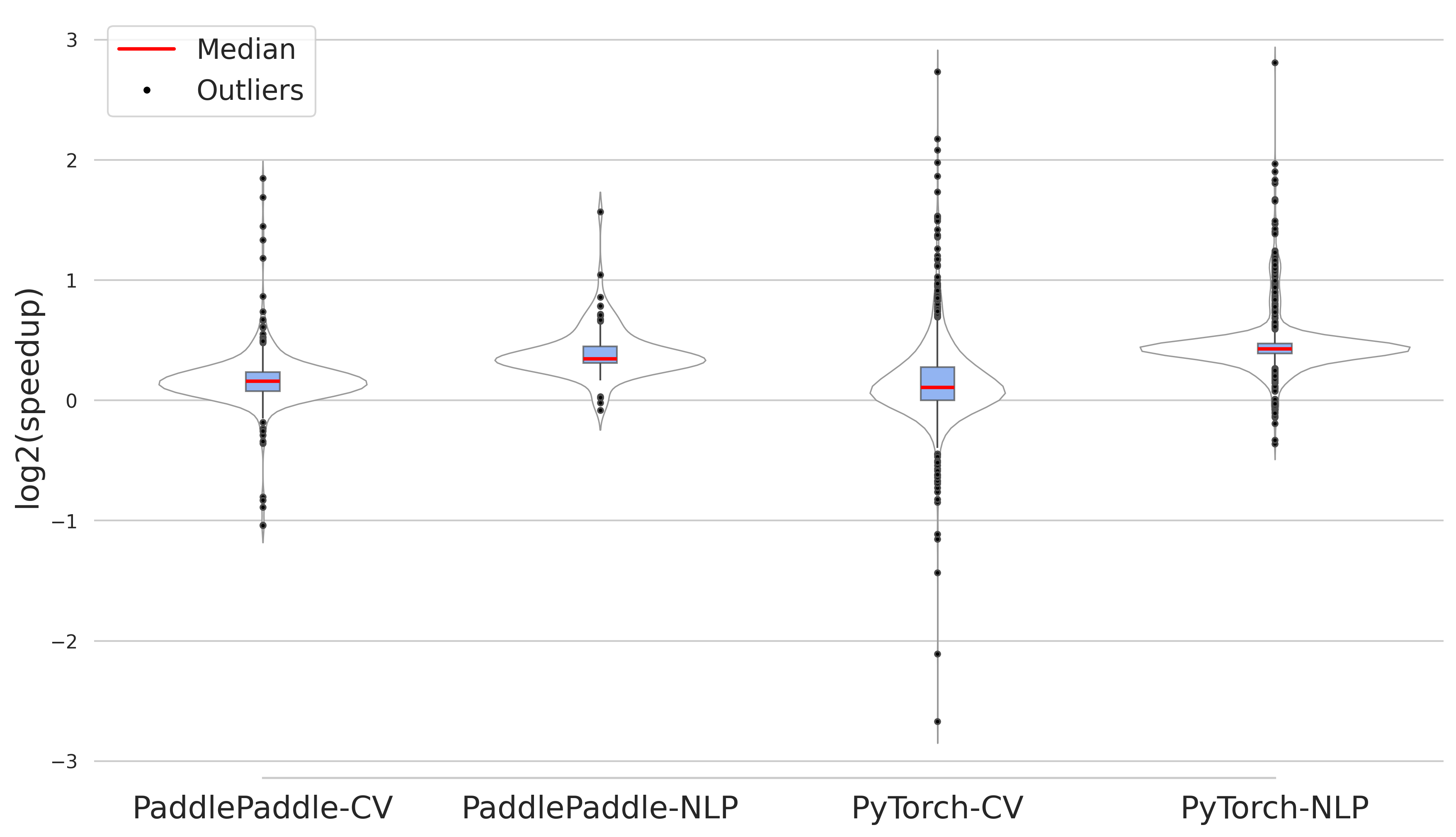}
    \caption{\textbf{Violin Plots of Per-sample Speedups (log$_2$(speedup)).} It shows distribution of per-sample speedups under correctness checks $t=0$.
Each violin shows the speedup spread for a compiler-task pair. Wider regions denote more frequent speedup values. 
Samples with log$_2$(speedup) <0 indicate performance degradation. }
    \label{fig:experiment-violinplot}
\end{figure}

Experiments were conducted on a server with dual Intel Xeon Platinum 8563C CPUs, 2 TB DRAM, and eight NVIDIA H20-3e GPUs (144 GB each). Only one GPU was used to ensure a single-device setting. The software environment included Python 3.11, PyTorch 2.8, and PaddlePaddle 3.2.

PaddlePaddle and PyTorch show broadly consistent dataset distributions. For graphs with fewer than 256 operators, the proportions are 6\% vs.\ 8.5\% in NLP (PaddlePaddle vs. PyTorch) and 13\% vs.\ 5.6\% in CV. For graphs with more than 256 operators, the proportions are 94\% vs.\ 91.5\% in NLP and 87\% vs.\ 94.4\% in CV. These residual differences will be resolved through a high-level IR transformation tool.


The four tables below (Tables~\ref{tab:paddle-nlp-values}-\ref{tab:pytorch-cv-values}) report detailed scores for the NLP / CV models tested on PaddlePaddle and PyTorch, corresponding to the statistical results behind each point of the $S_t$ plots in Section~3.2.1 and the $ES_t$ plots in Section~3.2.2.  Since $S_t$ does not consider the region $t>0$, the corresponding entries are marked ``-'' to indicate ``empty''.  $\gamma$ is used only when computing $ES_t$.  For points that contain no correct samples, $\eta$ is initialized to 0. $\alpha$ and $\beta$ are initialized to 1 if no correct samples exist.

\begin{table}[h!]
\centering

\begin{minipage}[t]{0.49\textwidth}
\centering
\resizebox{\linewidth}{!}{%
\begin{tabular}{|c||ccccccc|}
\hline
\textbf{t} & \textbf{$\alpha$} & \textbf{$\beta$} & \textbf{$\lambda$} & \textbf{$\eta$} & \textbf{S(t)} & \textbf{$\gamma$} & \textbf{ES(t)} \\ \hline \hline
-10	&	1.000 	&	1.000 	&	0.000 	&	0.000 	&	0.100 	&	0.100 	&	0.100 	\\
-9	&	1.000 	&	1.000 	&	0.000 	&	0.000 	&	0.100 	&	0.100 	&	0.100 	\\
-8	&	1.000 	&	1.000 	&	0.000 	&	0.000 	&	0.100 	&	0.100 	&	0.100 	\\
-7	&	1.441 	&	1.000 	&	0.009 	&	0.000 	&	0.103 	&	0.100 	&	0.103 	\\
-6	&	1.321 	&	0.858 	&	0.575 	&	0.049 	&	0.442 	&	0.100 	&	0.442 	\\
-5	&	1.284 	&	0.878 	&	0.953 	&	0.050 	&	1.139 	&	0.100 	&	1.139 	\\
-4	&	1.282 	&	0.878 	&	0.962 	&	0.049 	&	1.165 	&	0.100 	&	1.165 	\\
-3	&	1.278 	&	0.878 	&	0.991 	&	0.048 	&	1.248 	&	0.100 	&	1.248 	\\
-2	&	1.278 	&	0.878 	&	0.991 	&	0.048 	&	1.248 	&	0.100 	&	1.248 	\\
-1	&	1.278 	&	0.878 	&	0.991 	&	0.048 	&	1.248 	&	0.100 	&	1.248 	\\
0	&	1.278 	&	0.878 	&	0.991 	&	0.048 	&	1.248 	&	0.100 	&	1.248 	\\
1	&	1.278 	&	0.878 	&	0.991 	&	0.048 	&	-	&	0.100 	&	1.248 	\\
2	&	1.278 	&	0.878 	&	0.991 	&	0.048 	&	-	&	0.100 	&	1.248 	\\
3	&	1.278 	&	0.878 	&	0.991 	&	0.048 	&	-	&	1.000 	&	1.276 	\\
4	&	1.278 	&	0.878 	&	0.991 	&	0.048 	&	-	&	1.000 	&	1.276 	\\
\hline
\end{tabular}%
}
\caption{Values for PaddlePaddle-NLP}
\label{tab:paddle-nlp-values}
\end{minipage}
\hfill
\begin{minipage}[t]{0.49\textwidth}
\centering
\resizebox{\linewidth}{!}{%
\begin{tabular}{|c||ccccccc|}
\hline
\textbf{t} & \textbf{$\alpha$} & \textbf{$\beta$} & \textbf{$\lambda$} & \textbf{$\eta$} & \textbf{S(t)} & \textbf{$\gamma$} & \textbf{ES(t)} \\ \hline \hline
-10	&	0.981 	&	0.927 	&	0.010 	&	0.800 	&	0.102 	&	0.100 	&	0.102 	\\
-9	&	1.038 	&	0.927 	&	0.013 	&	0.615 	&	0.103 	&	0.100 	&	0.103 	\\
-8	&	1.170 	&	0.840 	&	0.029 	&	0.321 	&	0.107 	&	0.100 	&	0.107 	\\
-7	&	1.175 	&	0.840 	&	0.030 	&	0.310 	&	0.108 	&	0.100 	&	0.108 	\\
-6	&	1.344 	&	0.822 	&	0.423 	&	0.064 	&	0.300 	&	0.100 	&	0.300 	\\
-5	&	1.342 	&	0.817 	&	0.930 	&	0.046 	&	1.118 	&	0.100 	&	1.118 	\\
-4	&	1.345 	&	0.817 	&	0.933 	&	0.046 	&	1.129 	&	0.100 	&	1.129 	\\
-3	&	1.348 	&	0.817 	&	0.941 	&	0.045 	&	1.156 	&	0.100 	&	1.156 	\\
-2	&	1.354 	&	0.818 	&	0.963 	&	0.045 	&	1.229 	&	0.100 	&	1.229 	\\
-1	&	1.360 	&	0.818 	&	0.975 	&	0.045 	&	1.275 	&	0.100 	&	1.275 	\\
0	&	1.363 	&	0.818 	&	0.977 	&	0.044 	&	1.284 	&	0.100 	&	1.284 	\\
1	&	1.363 	&	0.818 	&	0.977 	&	0.044 	&	- 	&	0.100 	&	1.284 	\\
2	&	1.363 	&	0.818 	&	0.977 	&	0.044 	&	- 	&	0.100 	&	1.284 	\\
3	&	1.363 	&	0.818 	&	0.977 	&	0.044 	&	- 	&	1.000 	&	1.353 	\\
4	&	1.363 	&	0.818 	&	0.977 	&	0.044 	&	- 	&	1.000 	&	1.353 	\\
\hline
\end{tabular}%
}
\caption{Values for PyTorch-NLP}
\label{tab:pytorch-nlp-values}
\end{minipage}
\end{table}
\vspace{1cm}

\begin{table}[H]
\centering
\begin{minipage}[t]{0.49\textwidth}
\centering
\resizebox{\linewidth}{!}{%
\begin{tabular}{|c||ccccccc|}
\hline
\textbf{t} & \textbf{$\alpha$} & \textbf{$\beta$} & \textbf{$\lambda$} & \textbf{$\eta$} & \textbf{S(t)} & \textbf{$\gamma$} & \textbf{ES(t)} \\ \hline \hline
-10	&	1.056 	&	0.917 	&	0.054 	&	0.739 	&	0.114 	&	0.100 	&	0.114 	\\
-9	&	1.056 	&	0.917 	&	0.054 	&	0.739 	&	0.114 	&	0.100 	&	0.114 	\\
-8	&	1.056 	&	0.917 	&	0.054 	&	0.739 	&	0.114 	&	0.100 	&	0.114 	\\
-7	&	1.056 	&	0.920 	&	0.080 	&	0.529 	&	0.121 	&	0.100 	&	0.121 	\\
-6	&	1.117 	&	0.906 	&	0.228 	&	0.216 	&	0.173 	&	0.100 	&	0.173 	\\
-5	&	1.137 	&	0.906 	&	0.526 	&	0.129 	&	0.359 	&	0.100 	&	0.359 	\\
-4	&	1.131 	&	0.915 	&	0.732 	&	0.109 	&	0.591 	&	0.100 	&	0.591 	\\
-3	&	1.125 	&	0.912 	&	0.892 	&	0.105 	&	0.866 	&	0.100 	&	0.866 	\\
-2	&	1.123 	&	0.911 	&	0.939 	&	0.105 	&	0.969 	&	0.100 	&	0.969 	\\
-1	&	1.122 	&	0.911 	&	0.965 	&	0.105 	&	1.031 	&	0.100 	&	1.031 	\\
0	&	1.121 	&	0.910 	&	0.981 	&	0.105 	&	1.072 	&	0.100 	&	1.072 	\\
1	&	1.122 	&	0.910 	&	0.993 	&	0.104 	&	- 	&	1.000 	&	1.121 	\\
2	&	1.122 	&	0.910 	&	0.993 	&	0.104 	&	- 	&	1.000 	&	1.123 	\\
3	&	1.122 	&	0.910 	&	0.993 	&	0.104 	&	- 	&	1.000 	&	1.123 	\\
4	&	1.122 	&	0.910 	&	0.993 	&	0.104 	&	- 	&	1.000 	&	1.123 	\\
\hline

\end{tabular}%
}
\caption{Values for PaddlePaddle-CV}
\label{tab:paddle-cv-values}
\end{minipage}
\hfill
\begin{minipage}[t]{0.49\textwidth}
\centering
\resizebox{\linewidth}{!}{%
\begin{tabular}{|c||ccccccc|}
\hline
\textbf{t} & \textbf{$\alpha$} & \textbf{$\beta$} & \textbf{$\lambda$} & \textbf{$\eta$} & \textbf{S(t)} & \textbf{$\gamma$} & \textbf{ES(t)} \\ \hline \hline
-10	&	0.949 	&	0.825 	&	0.345 	&	0.618 	&	0.217 	&	0.100 	&	0.217 	\\
-9	&	0.954 	&	0.829 	&	0.404 	&	0.623 	&	0.249 	&	0.100 	&	0.249 	\\
-8	&	0.957 	&	0.821 	&	0.447 	&	0.609 	&	0.274 	&	0.100 	&	0.274 	\\
-7	&	0.984 	&	0.819 	&	0.605 	&	0.552 	&	0.399 	&	0.100 	&	0.399 	\\
-6	&	1.006 	&	0.814 	&	0.766 	&	0.505 	&	0.586 	&	0.100 	&	0.586 	\\
-5	&	1.030 	&	0.815 	&	0.882 	&	0.472 	&	0.783 	&	0.100 	&	0.783 	\\
-4	&	1.028 	&	0.815 	&	0.886 	&	0.474 	&	0.788 	&	0.100 	&	0.788 	\\
-3	&	1.028 	&	0.815 	&	0.886 	&	0.474 	&	0.788 	&	0.100 	&	0.788 	\\
-2	&	1.028 	&	0.815 	&	0.887 	&	0.473 	&	0.790 	&	0.100 	&	0.790 	\\
-1	&	1.028 	&	0.815 	&	0.887 	&	0.473 	&	0.790 	&	0.100 	&	0.790 	\\
0	&	1.028 	&	0.815 	&	0.887 	&	0.473 	&	0.790 	&	0.100 	&	0.790 	\\
1	&	1.028 	&	0.815 	&	0.887 	&	0.473 	&	- 	&	0.100 	&	0.790 	\\
2	&	1.028 	&	0.815 	&	0.887 	&	0.473 	&	- 	&	0.100 	&	0.790 	\\
3	&	1.028 	&	0.815 	&	0.887 	&	0.473 	&	- 	&	1.000 	&	1.025 	\\
4	&	1.028 	&	0.815 	&	0.887 	&	0.473 	&	- 	&	1.000 	&	1.025 	\\
\hline
\end{tabular}%
}
\caption{Values for PyTorch-CV}
\label{tab:pytorch-cv-values}
\end{minipage}
\end{table}

\end{document}